\definecolor{grey}{rgb}{0.5,0.5,0.5}  % 中灰
\theoremstyle{plain}
\newtheorem{theorem}{Theorem}[section]
\newtheorem{proposition}[theorem]{Proposition}
\theoremstyle{definition}
\newtheorem{definition}[theorem]{Definition}
\theoremstyle{remark}
\newcolumntype{Y}{>{\centering\arraybackslash}X}
\icmltitlerunning{Causal Autoregressive Diffusion Language Model}
\begin{document}

\twocolumn[
\icmltitle{Causal Autoregressive Diffusion Language Model}

% It is OKAY to include author information, even for blind
% submissions: the style file will automatically remove it for you
% unless you've provided the [accepted] option to the icml2025
% package.

% List of affiliations: The first argument should be a (short)
% identifier you will use later to specify author affiliations
% Academic affiliations should list Department, University, City, Region, Country
% Industry affiliations should list Company, City, Region, Country

% You can specify symbols, otherwise they are numbered in order.
% Ideally, you should not use this facility. Affiliations will be numbered
% in order of appearance and this is the preferred way.
\icmlsetsymbol{equal}{*}

\begin{icmlauthorlist}
\icmlauthor{Junhao Ruan}{neu,meituan}
\icmlauthor{Bei Li}{meituan}
\icmlauthor{Yongjing Yin}{meituan}
\icmlauthor{Pengcheng Huang}{neu}
\icmlauthor{Xin Chen}{meituan}
\icmlauthor{Jingang Wang}{meituan}
\icmlauthor{Xunliang Cai}{meituan}
\icmlauthor{Tong Xiao}{neu,niutrans}
\icmlauthor{Jingbo Zhu}{neu,niutrans}
\end{icmlauthorlist}

\icmlaffiliation{meituan}{Meituan Inc.}
% \icmlaffiliation{nlplab}{NLP Lab, School of Computer Science and Engineering, Northeastern University, Shenyang, China}
\icmlaffiliation{neu}{School of Computer Science and Engineering, Northeastern University, Shenyang, China}
\icmlaffiliation{niutrans}{NiuTrans Research, Shenyang, China}
% \icmlcorrespondingauthor{Junhao Ruan}{rangehow@outlook.com}
\icmlcorrespondingauthor{Bei Li}{libei17@meituan.com}
\icmlcorrespondingauthor{Tong Xiao}{xiaotong@mail.neu.edu.cn}

\icmlkeywords{Machine Learning, ICML}

\vskip 0.3in
]

% this must go after the closing bracket ] following \twocolumn[ ...

% This command actually creates the footnote in the first column
% listing the affiliations and the copyright notice.
% The command takes one argument, which is text to display at the start of the footnote.
% The \icmlEqualContribution command is standard text for equal contribution.
% Remove it (just {}) if you do not need this facility.

\printAffiliationsAndNotice{}  % leave blank if no need to mention equal contribution
% \printAffiliationsAndNotice{\icmlEqualContribution} % otherwise use the standard text.
% \printAffiliationsAndNotice{ \textsuperscript{†}Corresponding author.}
\begin{abstract}

In this work, we propose Causal Autoregressive Diffusion (CARD), a novel framework that unifies the training efficiency of ARMs with the high-throughput inference of diffusion models. CARD reformulates the diffusion process within a strictly causal attention mask, enabling dense, per-token supervision in a single forward pass. To address the optimization instability of causal diffusion, we introduce a soft-tailed masking schema to preserve local context and a context-aware reweighting mechanism derived from signal-to-noise principles. This design enables dynamic parallel decoding, where the model leverages KV-caching to adaptively generate variable-length token sequences based on confidence. Empirically, CARD outperforms existing discrete diffusion baselines while reducing training latency by 3 $\times$ compared to block diffusion methods. Our results demonstrate that CARD achieves ARM-level data efficiency while unlocking the latency benefits of parallel generation, establishing a robust paradigm for next-generation efficient LLMs.

\end{abstract}

\begin{figure}[t]
    \centering
    \includegraphics[width=\linewidth]{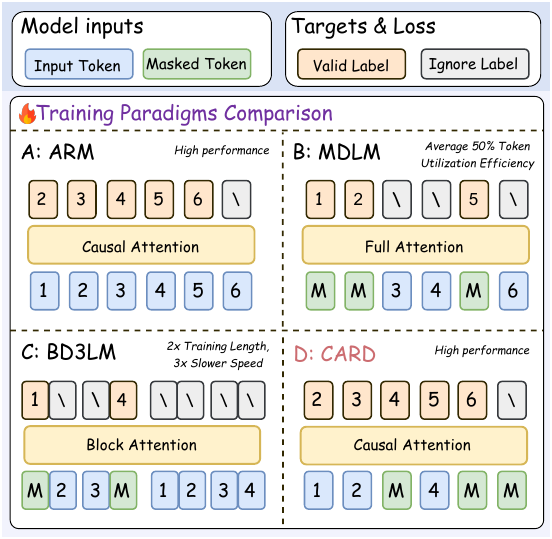}
    \caption{Comparison of training paradigms. Current diffusion methods like MDLM and BD3LM are inefficient compared to ARM; MDLM reaches only 50\% of ARM's expected efficiency, while BD3LM relies on complex masking and sequence duplication. CARD overcomes these issues by using causal diffusion, maintaining the same high efficiency as ARM while achieving better performance.}
    \label{fig:intro}
\end{figure}

\begin{figure*}[t] 
  \centering
  \includegraphics[width=\textwidth]{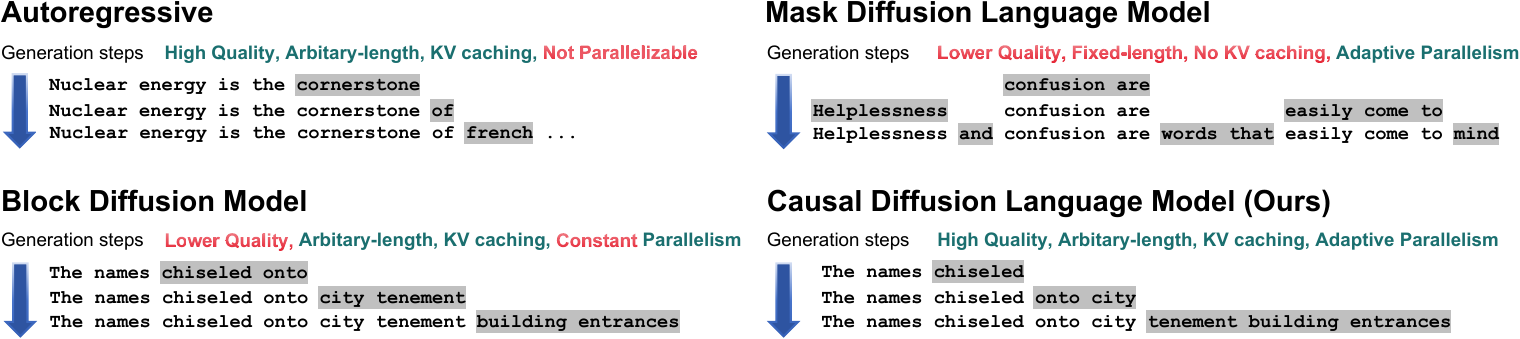}
  \caption{Inference comparison of the four paradigms. CARD achieves high-quality results similar to ARM. With KV cache support, friendly operators, and parallel generation, it offers faster throughput than earlier methods. In particular, our inference parallelism is flexible, unlike BD3LM which is tied to the fixed block size used during training.}
  \label{fig:intro_inference}
\end{figure*}

\section{Introduction}
Causal Autoregressive Models (ARMs) currently serve as the dominant paradigm for training Large Language Models (LLMs), owing to their stable training dynamics and predictable scaling laws. However, as model parameters and test-time compute requirements grow, the sequential nature of autoregressive decoding has emerged as a critical bottleneck. This inefficiency has sparked renewed interest in Text Diffusion Models, which offer theoretical advantages including parallel inference~\cite{d3pm}, iterative refinement~\cite{remdm}, and potentially higher data modeling capacity~\cite{ni2025diffusionlanguagemodelssuper}.

Early attempts at discrete diffusion faced significant hurdles due to complex training objectives involving variational bounds and numerical instabilities caused by noise sampling~\cite{d3pm}. A turning point occurred with the introduction of Simplified Masked Discrete Diffusion Models (MDLM)~\cite{md4,mdlm}. By simplifying the diffusion process into a subspace assumption analogous to a randomized Masked Language Modeling (MLM)~\cite{bert} task, MDLM ushered in the era of scalable text diffusion~\cite{smdm}, enabling the training of modern LLM-scale diffusion models like LLaDA~\cite{llada} and Dream~\cite{dream}.

Despite these advancements, standard MDLMs face severe architectural constraints. As illustrated in the MDLM panel of Figure~\ref{fig:intro}, its reliance on bidirectional (``Full'') attention prevents the utilization of Key-Value (KV) caching. Consequently, inference speed often falls behind ARMs in practical scenarios~\cite{fastdllm}. Furthermore, the arbitrary dependency order in training can lead to ineffective learning pathways \cite{kim2025train}, and the architecture fundamentally lacks support for variable-length generation.

To address these limitations, recent works have proposed hybrid architectures such as Block Diffusion (e.g., BD3LM)~\cite{arriola2025block}. These models (drawn in Figure \ref{fig:intro}) operate at a coarser granularity, applying causal attention between fixed-size blocks and bidirectional attention within them. However, it introduces significant computational overhead. The vectorization required for block-wise training necessitates complex attention masking and can increase memory consumption and training latency by factors of $2\times$  and 3$\times$, respectively. Moreover, the rigid, fixed block size fails to adapt to the varying information density inherent in natural language, limiting dynamic parallelism.

In this work, we propose CARD, a framework that combines the training efficiency of ARMs with the parallel inference of diffusion models through a strictly causal formulation. For training, CARD employs a \textit{shifted causal attention mechanism} where each position predicts its original token from the preceding noised context. This generates a dense diffusion loss for the entire sequence in a single forward pass, achieving 100\% token utilization without the overhead of block vectorization. For inference, CARD's causal structure enables KV-caching (Figure~\ref{fig:intro_inference}), allowing the model to append a variable number of \texttt{[MASK]} tokens to the prefix and decode them in parallel through iterative denoising. This dynamic strategy generates multiple tokens per step when confidence is high while falling back to sequential decoding when necessary.

We empirically validate CARD on 1B-parameter models trained on 300B tokens, benchmarking against state-of-the-art autoregressive and diffusion baselines. Our results demonstrate that CARD effectively bridges the gap between efficiency and performance:
\begin{itemize}[leftmargin=*]
    \item \textbf{Superior Performance:} CARD achieves an average zero-shot accuracy of \textbf{53.2\%}, outperforming existing diffusion models (MDLM and BD3LM) by over \textbf{5.7 points} and matching the generation quality of ARMs. Notably, it achieves the lowest zero-shot perplexity on 6 out of 8 evaluated domains.
    \item \textbf{Training \& Inference Efficiency:} By eliminating block-wise overhead, CARD reduces training latency by \textbf{3$\times$} compared to Block Diffusion, matching the throughput of standard ARMs. During inference, our confidence-based decoding achieves \textbf{1.7$\times$ to 4.0$\times$} wall-clock speedup with negligible quality degradation.
    \item \textbf{Data Potential:} Scaling analysis reveals that CARD possesses higher data efficiency than ARMs in data-constrained settings, continuing to improve performance through repeated training epochs where autoregressive baselines saturate.
\end{itemize}

\section{Background}
\label{sec:background}

We review the evolution of text diffusion models and the specific discrete objective function that serves as the foundation for our work.

\subsection{Evolution of Text Diffusion Models}
Applying diffusion to the discrete domain of language has followed two primary trajectories: continuous embedding methods and discrete state-space models. Continuous approaches, such as Diffusion-LM~\cite{diffusion-lm} and DiffuSeq~\cite{gong2023diffuseq}, map discrete tokens to Gaussian latent spaces. The disconnect between the continuous diffusion process and the discrete nature of text leads to rounding errors during decoding, often resulting in lower generation performance compared to autoregressive baselines.

Discrete DDPM (D3PM)~\cite{d3pm} addressed this by defining the corruption process directly on the vocabulary via transition matrices. While theoretically rigorous, D3PMs initially suffered from optimization instability and inefficient inference. To mitigate this, SEDD~\cite{sedd} reformulated the objective using score entropy, aligning discrete diffusion closer to its continuous counterparts. However, SEDD relied on time-dependent probability ratios, which prevented step-skipping and slowed inference. RADD~\cite{radd} later demonstrated that the explicit time dependency in the input was not strictly necessary for mathematical validity, enabling flexible sampling strategies.

A paradigm shift occurred with the introduction of MDLM~\cite{mdlm} and MD4~\cite{md4}. By isolating the absorbing state (masking) transition, these works reduced the complex variational bound to a simplified, randomized Masked Language Modeling (MLM) objective. This simplification significantly improved numerical stability and allowed for scaling laws to be established~\cite{smdm}, culminating in large-scale pre-trained models like LLaDA~\cite{llada}.

Despite these successes, standard MDLMs utilize bidirectional attention, which prevents the use of KV caching and degrades inference speed for long sequences. BD3LM~\cite{arriola2025block} attempts to bridge this gap by segmenting sequences into fixed-size blocks with causal masking between them. While this restores some parallel generation capabilities, BD3LM imposes significant training overheads due to complex attention masks and input duplication. 
Semi-autoregressive architectures have been further explored in works like LLaDA2~\cite{llada2}, SDAR~\cite{sdar}.

The concurrent WeDLM~\cite{wedlm} further specializes this by employing unidirectional attention within blocks; however, it generally adheres to the block diffusion paradigm where training operates at the block level rather than the token level which will also bring extra training cost. Distinctly, another concurrent work, C$^2$DLM~\cite{c2dlm}, explores causality through the lens of semantic concepts rather than model architecture. It analyzes causal relationships within training data but retains a bidirectional backbone and the standard MDLM training objective.

\begin{figure*}[t!]
    \centering

    % ---  Algorithm ---
    \begin{minipage}[t]{0.49\linewidth}
        \vspace{-10pt}
       \begin{algorithm}[H]
        \caption{CARD Training Framework}
        \label{alg:card_training}
        \footnotesize
        \begin{algorithmic}[1]
            \STATE {\bfseries Input:} Sequence $\mathbf{x}_0$, Model $\theta$
            \STATE {\bfseries Params:} Tail factor $\lambda$, Base $\beta$, Decay $p$
            
            \STATE \textit{// 1. Noise Scheduling}
            \STATE Sample $t \sim \mathcal{U}[0, 1]$
            
            \STATE \textit{// 2. Soft Tail Masking}
            \STATE $N = \max(1, \lfloor L \cdot t \rfloor)$,\quad $W = \min(L, \lfloor N \cdot \lambda \rfloor)$ 
            \STATE Define tail window indices: $\mathcal{I}_{\text{win}} = \{ L - W + 1, \dots, L \}$
            and sample a subset of indices $\mathcal{M} \subset \mathcal{I}_{\text{win}}$ such that $|\mathcal{M}| = N$
            \STATE Initialize $\mathbf{x}^t = \mathbf{x}_0$
            \FOR{each $n \in \mathcal{M}$}
                \STATE $x_n^t \leftarrow \text{[MASK]}$
            \ENDFOR
    
            \STATE \textit{// 3. Context-aware Reweighting}
            \FOR{$n=1$ {\bfseries to} $L$}
                \STATE $C_n = \mathbb{I}[x_n^t \text{ is [MASK]}] \cdot (1 + \mathbb{I}[x_{n-1}^t \text{ is [MASK]}])$
                \STATE $S_n^{\text{local}} = \sum_{i=1}^{n} C_i \cdot (1-p)^{(n-1-i)}$ 
                \STATE $w_n = (\beta + S_n^{\text{local}})^{-1}$
            \ENDFOR
    
            \STATE \textit{// 4. Optimization}
            \STATE $\mathcal{L}_{\text{CARD}} = \sum_{n=1}^{L} w_n \log p_\theta(x_{0,n} \mid \mathbf{x}^t_{<n})$
            \STATE Update $\theta$ using $\nabla_\theta \mathcal{L}_{\text{CARD}}$
        \end{algorithmic}
    \end{algorithm}

    \end{minipage}
    \hfill
    % --- Left Side: Image ---
    \begin{minipage}[t]{0.49\linewidth} 
        \vspace{0pt} 
        \centering
        \includegraphics[width=\linewidth]{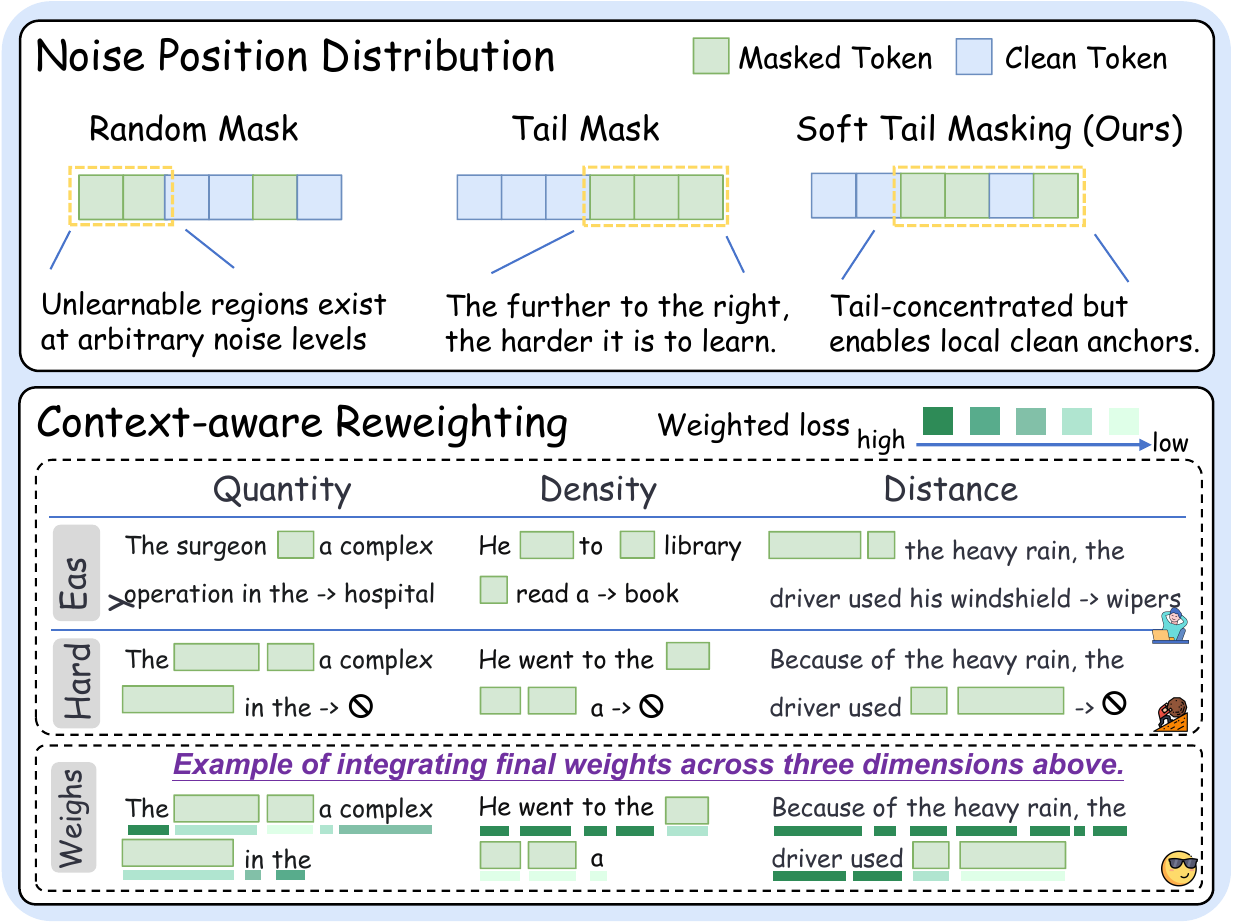}
        \caption{Soft Tail Masking concentrates noise at the sequence tail to resolve unlearnable regions in causal models via local clean anchors. (Bottom) Context-aware Reweighting adaptively down-weights the loss for high-ambiguity contexts similar to the diffusion ELBO principle, improving training stability.}
        \label{fig:CARD_DETAIL}
    \end{minipage}

\end{figure*}
\subsection{Discrete Diffusion Formulation}
Our method builds upon the absorbing state diffusion framework. Let $\textbf{x}_0$ be a sequence of length $L$. D3PM optimizes the negative Variational Lower Bound (ELBO) over $T$ steps, which decomposes into:
\begin{equation}
\label{eq:d3pm_elbo}
\begin{aligned}
    L_{\text{vb}} &= \underbrace{D_{\text{KL}}[q(\textbf{x}_T|\textbf{x}_0) || p(\textbf{x}_T)]}_{L_T} - \underbrace{\mathbb{E}_{q}[\log p_\theta(\textbf{x}_0|\textbf{x}_1)]}_{L_0} \\
    &+ \sum_{t=2}^T \underbrace{\mathbb{E}_{q}[\, D_{\text{KL}}[q(\textbf{x}_{t-1}|\textbf{x}_t, \textbf{x}_0) || p_\theta(\textbf{x}_{t-1}|\textbf{x}_t)] \,]}_{L_{t-1}}.
\end{aligned}
\end{equation}

In practice, to improve training stability and sample quality, D3PM often incorporates an auxiliary cross-entropy loss to directly predict $\textbf{x}_0$:
\begin{equation}
\label{eq:d3pm_total}
    L_{\text{D3PM}} = L_{\text{vb}} + \lambda \mathbb{E}_{q(\textbf{x}_0)} \mathbb{E}_{q(\textbf{x}_t|\textbf{x}_0)} [-\log \tilde{p}_\theta(\textbf{x}_0|\textbf{x}_t)],
\end{equation}
where $\lambda$ is a hyperparameter balancing the two terms. 

This objective involves summing over the entire vocabulary for the posterior computation, making it computationally expensive. MDLM drastically simplifies this by employing a SUBS parameterization~\cite{mdlm}, where the model predicts $\textbf{x}_0$ directly and unmasked tokens are carried over deterministically. The KL divergences collapse, and in the continuous-time limit ($T \to \infty$), the loss becomes a weighted MLM objective:
\begin{equation}
\label{eq:mdlm_nelbo}
\mathcal{L}_{\text{MDLM}} = \mathbb{E}_{t \sim \mathcal{U}[0,1]} \left[ w(t) \sum_{\ell \in \mathcal{M}_t} \log p_\theta(x^\ell | \textbf{x}_t, t) \right].
\end{equation}
Here, the loss is computed only over the masked tokens $\mathcal{M}_t$. The weighting term $w(t) = \frac{\alpha'_t}{1 - \alpha_t}$ is determined by the noise schedule $\alpha_t$. 

BD3LM~\cite{arriola2025block} extends this formulation to interpolate between autoregression and diffusion. By partitioning the sequence $\mathbf{x}$ into $B$ blocks, BD3LM defines an autoregressive distribution over blocks while performing discrete diffusion \emph{within} each block. The objective applies the MDLM loss per block, conditioned on the clean history of previous blocks $\mathbf{x}^{<b}$:
\begin{equation}
\label{eq:bd3lm_loss}
\mathcal{L}_{\text{BD3LM}}(\mathbf{x}; \theta) = \sum_{b=1}^B \mathbb{E}_{t \sim \mathcal{U}[0,1]} \mathbb{E}_{q} \left[ w(t) \log p_\theta(\mathbf{x}^b | \mathbf{x}^b_t, \mathbf{x}^{<b}) \right],
\end{equation}
where $\mathbf{x}^b_t$ represents the noisy state of block at time $t$. 

\subsection{Absorbing State Diffusion Process}
\label{subsec:preliminaries}
In this work, we focus on the specific discrete diffusion process that serves as our foundation. Unlike continuous diffusion models that operate on Gaussian noise, text diffusion models typically define a corruption process over a discrete vocabulary. We consider a continuous-time variable $t \in [0, 1]$, where $t=0$ corresponds to the clean ground-truth sequence $\mathbf{x}_0$, and $t=1$ represents a fully masked sequence.

We utilize the absorbing state (masking) transition. For any token in the sequence at time $t$, the forward process determines whether it remains its original value or transitions to a special \texttt{[MASK]} token. This is governed by a noise schedule $\sigma(t)$. For a given $t$, each token is independently replaced by \texttt{[MASK]} with probability $P(x^t = \text{\texttt{[MASK]}} | x_0) = \sigma(t)$. Throughout this paper, we adopt a linear schedule where $\sigma(t) = t$.

This formulation allows us to bridge the gap between deterministic text and stochastic training. At any step $t$, the model is presented with a partially corrupted version of the input, denoted as $\mathbf{x}^t$. The training objective is to learn a denoising function that recovers the original tokens $\mathbf{x}_0$ from these noisy observations. By sampling $t$ uniformly during training, the model learns to handle varying levels of corruption, from simple text completion to complex generation from scratch.

\section{The CARD Framework}
\label{sec:method}

We propose the Causal Autoregressive Diffusion (CARD) framework, the overall training procedure of which is summarized in Algorithm \ref{alg:card_training}. CARD utilizes a continuous-time noise addition method to apply diffusion processes within a causal architecture. This approach allows the model to leverage the robustness of diffusion training while maintaining the efficiency of autoregressive generation.

\subsection{Synthesizing Autoregression and Diffusion}
\label{subsec:motivation}

The core philosophy of CARD is to unify the stable training dynamics of ARMs with the flexible generation capabilities of Diffusion Models. We achieve this synthesis via a \textit{shifted causal attention mechanism}. Unlike standard ARMs that condition on a static, clean history to model $p(x_n | \mathbf{x}_{<n})$, CARD predicts the original token $x_n$ conditioned on a corrupted prefix $\mathbf{x}_{<n}^t$ sampled from a continuous-time diffusion process.
This architecture allows us to strictly maintain the triangular attention mask inherent to GPT-style models for computational efficiency, while simultaneously minimizing the expected reconstruction error across varying noise intensities. 
We formally define the resulting optimization objective, which aggregates the weighted log-likelihoods across all token positions, as follows:
\begin{equation}
\label{eq:card_loss_general}
\begin{aligned}
    \mathcal{L}_{\text{CARD}} = &\mathbb{E}_{t \sim \mathcal{U}[0,1], \mathbf{x}^t \sim q(\mathbf{x}^t | \mathbf{x}_0)} \\
    &\left[ \sum_{n=1}^{L} w(n, \mathbf{x}_{<n}^t) \log p_\theta(x_n | \mathbf{x}^t_{<n}) \right].
\end{aligned}
\end{equation}
This formulation generates dense supervision for the entire sequence in a single forward pass, theoretically preserving the $O(L)$ efficiency of standard ARMs without the computational overhead of block-wise vectorization.

However, strictly enforcing a causal constraint within a diffusion framework introduces a unique pathological state we term Information Collapse, which makes naive implementation unstable. In bidirectional architectures (e.g., BERT or MDLM), every token attends to the full global sequence. Even if a local region is heavily masked, the model can anchor its predictions on future tokens, maintaining a relatively uniform information density across positions. In contrast, under a causal mask, the visible context for a token $x_n$ is strictly limited to its predecessors $\mathbf{x}_{<n}$. This creates a severe information asymmetry: early tokens with short histories are extremely vulnerable to corruption. For instance, if the first few tokens of a sequence are masked, predicting the subsequent token becomes mathematically equivalent to random guessing, as there is neither past history nor future context to rely on. Conversely, later tokens in long sequences often possess redundant history and remain predictable even under moderate noise.

Standard uniform diffusion strategies ignore this asymmetry, treating the blind guessing scenarios of early tokens equally with the well-supported predictions of later tokens. Forcing the model to minimize loss on these invalid contexts results in high-variance gradients and optimization instability. To make CARD effective, we must explicitly address this variable reliability of the causal context. We propose two complementary strategies: \textit{Soft Tail Masking} (Section~\ref{subsec:noise_preference}) to structurally guarantee that the historical context retains valid signals, and \textit{Context-aware Reweighting} (Section~\ref{subsec:daum}) to adaptively down-weight predictions where the context remains too ambiguous.

\subsection{Soft Tail Masking}
\label{subsec:noise_preference}

Causal diffusion requires a noise strategy that respects the autoregressive nature of the model. Standard uniform masking is ill-suited here because it randomly corrupts tokens anywhere, including the sequence start ($n \ll L$). Since early tokens inherently possess little history, masking their few available context tokens effectively forces the model to predict from pure noise.
To guarantee a valid historical context, a natural intuition is to concentrate all corruption at the sequence tail. This maximizes the clean prefix, ensuring stable supervision. However, strict tail masking completely removes the immediate neighbors of the corrupted tokens, ignoring the strong local dependencies required for language modeling~\cite{distance}.

We propose \textit{Soft Tail Masking} (Figure~\ref{fig:CARD_DETAIL}), a strategy designed to alleviate the issue by restricting masking to a dynamic tail window
$[\max(0, L - \lambda t \cdot L), L]$. 
By maintaining a clean prefix while creating a mixed-state transition zone at the tail, we ensure the model accesses sufficient global history while retaining the local context needed for prediction. 
We prove in Appendix~\ref{appendix:foundations} (Proposition 2) that this preserves a higher lower bound on Mutual Information than uniform masking.

\subsection{Context-aware Reweighting}
\label{subsec:daum}
Compared to standard ARMs, CARD predicts $x_n$ from a stochastically corrupted prefix $\mathbf{x}_{<n}$. When the prefix is heavily masked, the conditional entropy $H(x_n|\mathbf{x}_{<n}^t)$ increases sharply. Forcing the model to produce confident predictions under such high uncertainty results in noisy gradients and optimization instability.
The diffusion models typically employ a global weighting scheme (e.g., $1/t$ in MDLM) to balance contributions across noise levels at sequence level, grounded in the ELBO framework. However, global weighting is insufficient for causal models since the effective noise level varies \textit{locally} at each token position $n$.

% To address this, 
Considering the causal characteristics of CARD, we introduce a context-aware reweighting mechanism. 
Specifically, we propose to evaluate the ambiguity of the context $\mathbf{x}_{<n}^t$ along three dimensions: Quantity (total noise count), Distance (proximity of noise to target), and Density (consecutive corruption).
These factors are synthesized into a unified local ambiguity score $S_n^{local}$, defined as the distance-weighted sum of corruption costs in the history:
% \begin{equation}
\label{eq:local_score_combined}
\begin{align}
    S_n^{local} &= \sum_{i=1}^{n} C_i \cdot (1-p)^{(n-i)}, \\
    % \text{where } 
    C_i &= \mathbb{I}[x_i = \texttt{[MASK]}] \cdot \left(1 + \mathbb{I}[x_{i-1} = \texttt{[MASK]}]\right).
\end{align}
% \end{equation}
The formulation explicitly maps the three dimensions to mathematical components:
\begin{itemize}[leftmargin=*]
    \item \textbf{Noise Quantity:} The summation $\sum_{i=1}^{n}$ accumulates the corruption costs across the history. This term ensures that a higher total number of masked tokens leads to a larger cumulative score, naturally suppressing the weight for heavily corrupted contexts.
    \item \textbf{Noise Distance:} Following previous findings that the relevance of historical tokens decays exponentially with distance~\cite{distance,exponentialdecay}, we introduce the decay factor $(1-p)^{(n-i)}$, where $p$ is a decay factor set to a constant 0.5. It ensures that noise in the immediate context will be penalized more heavily than noise in the distant past, as the immediate context is most critical for next-token prediction.
    \item \textbf{Noise Density:} The cost term $C_i$ assigns a higher cost to consecutive masked tokens (e.g., spans), reflecting the difficulty of reconstructing regions where local dependencies are entirely severed.
\end{itemize}

Finally, the context-aware loss weight $w(n, \mathbf{x}^t_{<n})$ is computed as:
\begin{equation}
\label{eq:daum_weight}
    w(n, \mathbf{x}^t_{<n}) = \frac{1}{\beta + S_n^{local}},
\end{equation}
where $\beta$ is a smoothing constant (typically set to 1). 

Our mechanism shifts the reweighting granularity from the sequence level (as in MDLM and BD3LM) to the token level. By down-weighting tokens in degraded contexts, the model focuses on regions with sufficient signal, leading to more efficient optimization (see Appendix~\ref{appendix:foundations}).

\begin{table*}[t]
\centering
\caption{LM Evaluation Harness results. All models are 1B parameters trained on 300B tokens.}
\label{tab:main-results}
\small 
\resizebox{\textwidth}{!}{ 
\begin{tabular}{l|cccccccc|c}
\toprule
\textbf{Model} & 
\begin{tabular}[c]{@{}c@{}} \footnotesize ARC-Challenge \\ \scriptsize 25-shot \end{tabular} & 
\begin{tabular}[c]{@{}c@{}} \footnotesize ARC-Easy \\ \scriptsize 25-shot \end{tabular} & 
\begin{tabular}[c]{@{}c@{}} \footnotesize CommonsenseQA \\ \scriptsize 7-shot \end{tabular} & 
\begin{tabular}[c]{@{}c@{}} \footnotesize HellaSwag \\ \scriptsize 3-shot \end{tabular} & 
\begin{tabular}[c]{@{}c@{}} \footnotesize MMLU-redux \\ \scriptsize 5-shot \end{tabular} & 
\begin{tabular}[c]{@{}c@{}} \footnotesize PIQA \\ \scriptsize 0-shot \end{tabular} & 
\begin{tabular}[c]{@{}c@{}} \footnotesize SciQ \\ \scriptsize 0-shot \end{tabular} & 
\begin{tabular}[c]{@{}c@{}} \footnotesize Winogrande \\ \scriptsize 5-shot \end{tabular} & 
\textbf{AVG} \\ 

\midrule
\rowcolor{gray!20}
\multicolumn{10}{c}{\textit{Autoregressive Models}} \\
\midrule
ARM   & 34.04          & 64.65          & 52.74          & 61.26          & 25.45          & 75.95          & 81.10          & 55.96          & 56.39 \\

\midrule
\rowcolor{gray!20}
\multicolumn{10}{c}{\textit{Diffusion Models}} \\
\midrule
BD3LM & 27.30          & 48.06          & 44.06          & 42.48          & \textbf{26.93} & 59.79          & 79.60          & 51.38          & 47.45 \\
MDLM  & 29.44          & 49.16          & 36.45          & 48.32          & 26.36          & 59.63          & 76.60          & \textbf{54.46} & 47.55 \\
CARD (Ours)  & \textbf{32.68} & \textbf{60.77} & \textbf{48.73} & \textbf{53.29} & 25.65          & \textbf{71.71} & \textbf{79.80} & 53.28          & \textbf{53.23} \\
\bottomrule
\end{tabular}
}
\end{table*}

\begin{table*}[t]
\centering
\caption{PPL evaluation on various text domains. Lower is better.}
\label{tab:ppl-comprehensive}
\small
\begin{tabularx}{\textwidth}{l YYYYYYYY c}
\toprule
\textbf{Model} & 
\textbf{AG News} & 
\textbf{arXiv} & 
\textbf{LAMBADA} & 
\textbf{LM1B} & 
\textbf{OpenWebText} & 
\textbf{PTB} & 
\textbf{PubMed} & 
\textbf{WikiText} & 
\textbf{AVG} \\ 

\midrule
\rowcolor{gray!20}
\multicolumn{10}{c}{\textit{Autoregressive Models}} \\
\midrule
ARM   & 30.62 & 18.15 & 33.83 & 39.14 & 17.68 & 117.56 & 11.93 & 40.52 & 38.68 \\

\midrule
\rowcolor{gray!20}
\multicolumn{10}{c}{\textit{Diffusion Models}} \\
\midrule
BD3LM & 41.18 & 44.60 & 39.17 & 40.04 & 40.97 & 118.30 & 34.66 & 39.28 & 49.78 \\
MDLM  & 42.20 & 23.58 & 35.87 & 48.68 & 20.77 & 168.19 & 17.23 & 42.18 & 49.84 \\
CARD (Ours) & \textbf{27.67} & \textbf{20.34} & \textbf{30.36} & \textbf{29.61} & \textbf{17.59} & \textbf{97.74} & \textbf{13.20} & \textbf{38.67} & \textbf{34.40} \\
\bottomrule
\end{tabularx}
\end{table*}

\subsection{Confidence-Based Block Inference}
\label{subsec:inference}

We employ a confidence-based block sampling strategy to accelerate generation. Specifically, at each generation step, we initialize a candidate block of length $K$ by appending mask tokens to the sequence tail, denoted as $\mathbf{x}^{(0)} = \{ \texttt{[MASK]}_1, \dots, \texttt{[MASK]}_K \}$. We then perform iterative parallel denoising, where a token $x_i$ at iteration $j$ is updated only if its prediction probability exceeds a threshold $\tau$~\cite{fastdllm,sdar}:
\begin{equation}
    x_i^{(j+1)} = 
    \begin{cases} 
    x_i^{(j)} & \text{if } x_i^{(j)} \neq \texttt{[MASK]}, \\
    \arg\max_{w} p_\theta(w | \mathbf{x}^{(j)}) & \text{if } \max_{w} p_\theta(w | \mathbf{x}^{(j)}) > \tau, \\
    \texttt{[MASK]} & \text{otherwise.}
    \end{cases}
\end{equation}
To strictly bound latency, we impose a maximum step limit $T_{max}$. If the block is not fully denoised within $T_{max}$ steps, all remaining masks are immediately decoded. Finally, the generated block is added to the KV cache. This approach allows the inference speed to be dynamically controlled by adjusting the block size $K$, threshold $\tau$, and step limit $T_{max}$.

\section{Experiments}
\label{sec:experiments}

To validate the effectiveness of the CARD framework, we benchmarked it against three architectures: ARM, MDLM, and BD3LM. All models were pre-trained on a 300B-token subset of FineWeb~\cite{fineweb} and aligned to a 1B-parameter scale. To ensure a fair comparison, the baselines utilized state-of-the-art optimizations: MDLM adopted variable-length packed QKV operators from Flash Attention~\cite{fa2}, while BD3LM integrated \texttt{torch.compile} with Flex Attention~\cite{flexattention}.  Detailed model structure configurations and training hyperparameters are provided in Appendix~\ref{app:experiment_setup}.

\subsection{Computational Efficiency}
We first address the training cost bottleneck typical of diffusion models. Normalizing the training latency of ARM and CARD to a baseline of $1.0\times$, MDLM incurs a $1.5\times$ cost due to its bidirectional attention mechanism, while BD3LM rises to roughly $3.0\times$ driven by input duplication constraints. In contrast, CARD eliminates these overheads, achieving superior performance while maintaining ARM-level training efficiency.

\subsection{Performance Evaluation}

\paragraph{Downstream Task Accuracy.}
We assessed disciplinary knowledge using ARC-Challenge \& ARC-Easy~\cite{arc}, MMLU~\cite{mmlu}, and SciQ~\cite{sciq}; commonsense reasoning via PIQA~\cite{piqa}, HellaSwag~\cite{hellaswag}, and CommonsenseQA~\cite{commonsenseqa}; and context disambiguation with Winogrande~\cite{WinoGrande}. As detailed in Table~\ref{tab:main-results}, a distinct performance hierarchy is evident. While the baseline methods MDLM and BD3LM plateau at an average of approximately 47.50\%, CARD establishes significantly better results for non-autoregressive models with an average accuracy of 53.23\%. The substantial 5.7\% absolute improvement over prior diffusion baselines indicates that CARD effectively mitigates the performance degradation. Crucially, while a marginal gap to the autoregressive (ARM) upper bound remains, CARD significantly narrows this disparity, demonstrating that dense supervision can yield ARM-competitive performance without sacrificing the efficiency benefits of parallel decoding.

\paragraph{Language Modeling and Generalization.}
To evaluate intrinsic generative quality, we measured zero-shot perplexity across three distinct domains: general corpora using WikiText~\cite{wikitext} and OpenWebText~\cite{openwebtext}; news and periodicals using AG News~\cite{agnews}, LM1B~\cite{lm1b}, and PTB~\cite{ptb}; and specialized or long-context tasks using arXiv, PubMed~\cite{scientificpaper}, and LAMBADA~\cite{LAMBADA}.
The results in Table~\ref{tab:ppl-comprehensive} show that CARD consistently outperforms both diffusion baselines. More notably, CARD surpasses the ARM baseline on 6 out of 8 datasets. It achieves the best overall scores on general domains and the context-heavy LAMBADA benchmark, while remaining competitive on the specialized scientific vocabularies of arXiv and PubMed.
We attribute the generalization advantage to the training objective. Standard ARMs rely on next-token prediction, which has been argued to be ``myopic,'' prioritizing local correlations and rote memorization~\cite{rollthedice}. Conversely, CARD's denoising objective functions as a form of ``teacherless training,'' forcing the model to predict tokens from corrupted contexts. The mechanism incentivizes the model to capture global structural patterns and long-range dependencies rather than relying on local statistical shortcuts, resulting in superior generalization on unseen data compared to the strictly left-to-right of ARMs.

\begin{table}[t]
\centering
% 修改点 1：明确 PPL 实验的模型大小和数据量
\caption{Perplexity (PPL) results on the LM1B dataset. Models are 110M parameters trained on 33B tokens using EMA.}
\label{tab:lm1b-ppl}
\small
\begin{tabularx}{\columnwidth}{l|YYYY} 
\toprule
\rowcolor{gray!20}
\textbf{Model} & \textbf{ARM} & \textbf{BD3LM} & \textbf{MDLM} & \textbf{CARD} \\ \midrule
PPL $\downarrow$ & \textbf{21.12} & 35.06 & 37.48 & 21.54 \\ 
\bottomrule
\end{tabularx}
\end{table}

\begin{figure}[t] % h:当前位置, t:页顶, b:页底, p:独立一页
    \centering
    \includegraphics[width=\columnwidth]{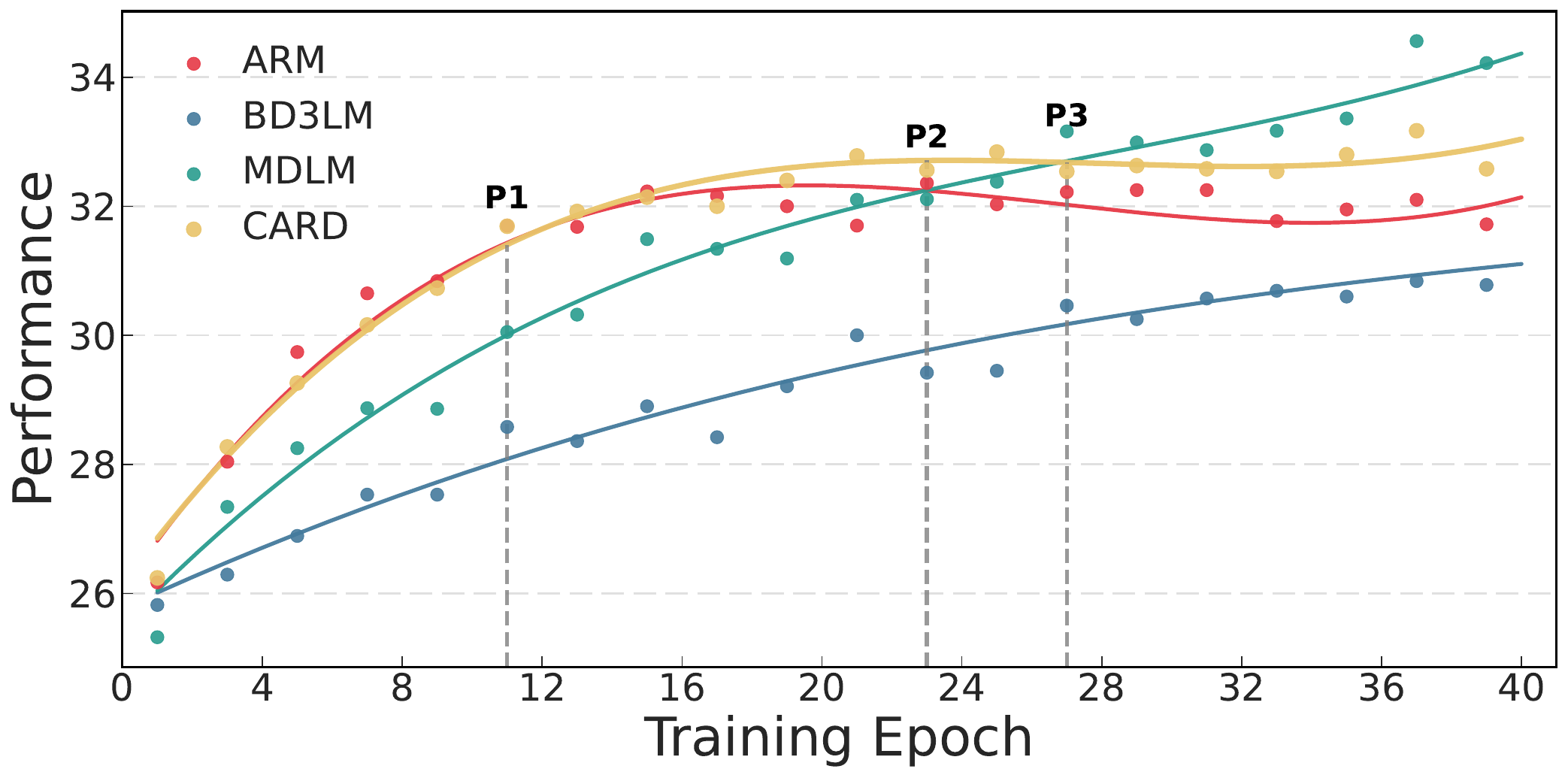} 

    \caption{HellaSwag performance of four paradigms under repeated training on a FineWeb-Edu subset. The annotations mark specific crossover points in performance: P1 denotes the epoch where CARD surpasses ARM, P2 where MDLM overtakes ARM, and P3 where MDLM exceeds CARD.}
    \label{fig:epoch_scaling}
\end{figure}

\begin{table*}[t]
\centering

\caption{Ablation study on noise position and context-aware reweighting mechanisms.}

\label{tab:ablation}
\small 
\resizebox{\textwidth}{!}{ 
\begin{tabular}{l|cccccccc|c}
\toprule
\textbf{Setting} & 
\begin{tabular}[c]{@{}c@{}} \footnotesize ARC-C \\ \scriptsize 25-shot \end{tabular} & 
\begin{tabular}[c]{@{}c@{}} \footnotesize ARC-E \\ \scriptsize 25-shot \end{tabular} & 
\begin{tabular}[c]{@{}c@{}} \footnotesize CSQA \\ \scriptsize 7-shot \end{tabular} & 
\begin{tabular}[c]{@{}c@{}} \footnotesize HellaS \\ \scriptsize 3-shot \end{tabular} & 
\begin{tabular}[c]{@{}c@{}} \footnotesize MMLU \\ \scriptsize 5-shot \end{tabular} & 
\begin{tabular}[c]{@{}c@{}} \footnotesize PIQA \\ \scriptsize 0-shot \end{tabular} & 
\begin{tabular}[c]{@{}c@{}} \footnotesize SciQ \\ \scriptsize 0-shot \end{tabular} & 
\begin{tabular}[c]{@{}c@{}} \footnotesize Wino \\ \scriptsize 5-shot \end{tabular} & 
\textbf{AVG} \\ 

\midrule
\textbf{CARD (Ours)} & \textbf{32.68} & \textbf{60.77} & \textbf{48.73} & \textbf{53.29} & \textbf{25.45} & \textbf{71.71} & \textbf{79.80} & \textbf{53.28} & \textbf{53.21} \\

\midrule
\multicolumn{10}{l}{\textit{Ablation on Noise Position}} \\
\quad w/o Relaxed Window (Strict Tail) & 32.85 & 59.72 & 48.24 & 52.33 & 25.38 & 71.28 & 78.90 & 51.31 & 52.50 \\
\quad w/o Tail Preference (Random)     & 29.95 & 59.38 & 46.03 & 51.78 & 25.14 & 71.00 & 76.30 & 53.36 & 51.62 \\

\midrule
\multicolumn{10}{l}{\textit{Ablation on Weighting}} \\
\quad w/o Context-aware Weighting            & 30.63 & 58.29 & 47.01 & 50.14 & 25.23 & 70.35 & 79.20 & 52.41 & 51.66 \\
\bottomrule
\end{tabular}
}
\end{table*}

\section{Analysis}

\subsection{Training Stability}
\label{sec:stability}

A bottleneck in scaling discrete diffusion models is optimization instability. As derived in Appendix~\ref{appendix:foundations} (Proposition 3), BD3LM suffers from distributional discontinuities at block boundaries, while MDLM encounters high variance when predicting tokens from heavily masked contexts. To address these issues, practical implementations of these architectures often rely heavily on Exponential Moving Average (EMA). Although rarely highlighted in their theoretical formulations, EMA is adopted by default in the official training repositories of both MDLM and BD3LM as a necessary stabilizer. In contrast, CARD is designed for inherent stability through its continuous causal loss landscape and context-aware reweighting (Proposition 1), which minimizes gradient variance by design. This theoretical guarantee reduces the dependency on aggressive parameter smoothing, ensuring that the optimization trajectory remains true to the underlying data distribution.

\textbf{Empirical Validation with EMA.} 
To investigate the training stability in detail, we conducted a controlled study on the LM1B dataset. We trained all models (110M parameters, 33B tokens) using the EMA configurations explicitly found in the baseline codebases. As shown in Table~\ref{tab:lm1b-ppl}, even with EMA effectively buffering the gradient noise, MDLM and BD3LM yield perplexities of 37.48 and 35.06, respectively. In comparison, CARD achieves a significantly lower perplexity of 21.54 under identical conditions. The result demonstrates that while baselines require EMA to manage their structural instability, CARD utilizes it to further refine its density estimation, converging to a solution comparable to the ARM baseline.

\subsection{Data Potential and Epoch Scaling}
\label{subsec:data_potential}

We define \textit{Data Potential} as an architecture's capacity to continuously extract signal from a fixed data distribution over repeated training iterations. Theoretically, based on the number of learnable conditional probability paths per datum (derived in Appendix~\ref{appendix:complexity}), we posit a hierarchy of $\text{MDLM} > \text{CARD} > \text{BD3LM} > \text{ARM}$, suggesting that ARMs saturate rapidly, whereas diffusion-based models sustain gains over longer horizons.

Empirical validation on 1B-parameter models trained on a 1B token subset of FineWeb-Edu~\cite{fineweb} over 40 epochs confirms the ranking (Figure~\ref{fig:epoch_scaling}). At the early training stage, CARD surpasses the ARM baseline at the inflection point P1 ($\approx$ epoch 11) as the latter saturates. MDLM later overtakes ARM (P2) and eventually CARD (P3). Crucially, the interval preceding P3 identifies a functional ``sweet spot'': CARD significantly outperforms ARM without requiring the extensive training horizon MDLM needs to realize its full potential.

The result has critical implications given the current scarcity of high-quality data, which necessitates training beyond Chinchilla-optimal ratios to minimize inference costs. 
While standard ARMs are ill-suited for the regime due to early saturation and MDLMs incur high initial compute costs, CARD effectively bridges the gap. It extends the performance boundary within practical computational budgets, offering a superior scaling solution when data quantity is the primary bottleneck.

\begin{table}[t]
\centering
\caption{Gen PPL results.}
\label{tab:gen-ppl}
\resizebox{\columnwidth}{!}{ 
\begin{tabular}{lrc} % 将中间的 c 改为 r
\toprule
\textbf{Decoding Configuration} & \multicolumn{1}{c}{\textbf{Throughput (tok/s)}} & \textbf{AVG PPL $\downarrow$} \\ 
\midrule
\textbf{ARM (Baseline)} & 3,771 \scriptsize{(1.00$\times$)} & 11.19 \\
\midrule
\multicolumn{3}{l}{\textit{\textbf{CARD (Ours)}}} \\
\hspace{3mm} Block=16, Steps=16 & 6,441 \scriptsize{(1.71$\times$)} & 12.65 \\
\hspace{3mm} Block=16, Steps=8  & 10,702 \scriptsize{(2.84$\times$)} & 13.81 \\
\hspace{3mm} Block=32, Steps=8  & 15,064 \scriptsize{(4.01$\times$)} & 18.38 \\
\bottomrule
\end{tabular}
} 
\end{table}

\subsection{Ablation Study}
\label{subsec:ablation}

In addition to the architectural comparison, we conducted ablation studies to validate the effectiveness of our proposed noise position preference (Section \ref{subsec:noise_preference}) and context-aware reweighting mechanisms (Section \ref{subsec:daum}). The results are summarized in Table~\ref{tab:ablation}, leading to the following observations.

\paragraph{The noise distribution strategy plays a crucial role in unidirectional models.}
As shown in the results, applying noise to random positions (w/o Tail Preference) yields the lowest performance among the noise strategies. In a causal framework, tokens at the beginning of the sequence lack preceding context. If these tokens are masked randomly, the model cannot recover them effectively, leading to training inefficiencies. By concentrating noise at the tail, we observe a clear performance improvement. This suggests that a tail-biased noise strategy better aligns with the generative nature of language modeling, where history is used to predict the future. Furthermore, the results highlight the importance of the relaxed noise window. The ``Strict Tail'' setting, where the end of the sequence is a solid block of noise, underperforms compared to the full CARD implementation. A solid noise block creates an information void where the final tokens lack any immediate local context. By allowing a mix of clean and noisy tokens within the tail window (Relaxed Window), 
% we prevent the formation of this information black hole, 
we enable the model to leverage local cues even during the denoising process.

\paragraph{Removing context-aware reweighting results in a noticeable drop in accuracy across most benchmarks.}
The dynamic weighting mechanism, rooted in the ELBO formulation, uses noise intensity to balance the training objective. It naturally integrates the next-token prediction task with the diffusion objective by assigning appropriate importance to each token based on the clarity of its context. This ensures that the model focuses on learnable patterns rather than being overwhelmed by high-entropy predictions in heavily corrupted contexts.

\subsection{Generation Perplexity Analysis}

To further evaluate the generation quality, we conducted a generation perplexity (Gen PPL) analysis on Hellaswag prefixes using the model trained in our main experiment. For robust evaluation, we report the average PPL computed by four base models: \texttt{Qwen3-8B}~\cite{qwen3}, \texttt{SmolLM3-3B}~\cite{smollm3}, \texttt{gemma-3-27b}~\cite{gemma3}, and \texttt{gpt2-large}~\cite{gpt2}. All inference tests were performed with a batch size of 128. As shown in Table \ref{tab:gen-ppl}, our method demonstrates a promising trade-off between speed and quality. Specifically, we achieve a 1.62$\times$ speedup while maintaining a generation quality comparable to the ARM baseline. Furthermore, in a more aggressive setting, our method delivers over 4$\times$ inference acceleration with only a slight increase in PPL. These results strongly validate the potential of our method to serve as a new baseline for efficient generation. Additionally, we provide a detailed case study and discuss the potential failure modes of parallel generation in Appendix~\ref{app:case_study}.

\section{Conclusion}

We presented CARD, a unified framework that reconciles the training stability of autoregressive models with the parallel inference capabilities of diffusion. By reformulating discrete diffusion within a strict causal constraint, CARD eliminates the computational overhead of block-based architectures. Empirically, CARD not only matches the generation quality of standard ARMs but also speed up to $1.7\times$ through dynamic parallel decoding. Crucially, our analysis of data potential reveals that CARD avoids early saturation in multi-epoch regimes, positioning it as a highly data-efficient backbone for next-generation LLMs.

\section*{Impact Statement}

This paper presents work whose goal is to advance the field of Machine Learning, particularly by improving the training and inference efficiency of Large Language Models. There are many potential societal consequences of our work, none which we feel must be specifically highlighted here.

\bibliography{example_paper}
\bibliographystyle{icml2026}

\newpage
\appendix
\onecolumn

\section{Mathematical Foundations of CARD}
\label{appendix:foundations}
In this section, we provide a formal analysis of the optimization dynamics and information-theoretic properties of the Causal Autoregressive Diffusion (CARD) framework. We contrast CARD with Masked Discrete Diffusion Models (MDLM) and Block-wise Discrete Diffusion Models (BD3LM).

\subsection{Notation and Preliminaries}

Let $\mathbf{x} = (x_1, \dots, x_L)$ be a sequence of length $L$ from a discrete vocabulary $\mathcal{V}$. Let $\mathcal{M} \subset \{1, \dots, L\}$ denote the set of indices masked at time $t \in [0,1]$. For any position $n$, we define the \textit{causal context} $\mathcal{C}_n = \{x_i \mid i < n, i \notin \mathcal{M}\}$. The training objective is to minimize the expected negative log-likelihood:
\begin{equation}
\mathcal{L}(\theta) = \mathbb{E}_{t, \mathcal{M}} \left[ \sum_{n=1}^L w(n, \mathcal{C}_n) \cdot \ell_n(\theta; \mathcal{C}_n) \right]
\end{equation}
where $\ell_n(\theta; \mathcal{C}_n) = -\log p_\theta(x_n \mid \mathcal{C}_n)$ and $w(n, \mathcal{C}_n)$ is the weight assigned to the prediction at position $n$.

\begin{definition}[Local Ambiguity Score]
The Local Ambiguity Score $S_n^{local}$ is defined as a weighted sum of corruption costs within the causal window:
\begin{equation}
S_n^{local}(\mathcal{C}_n) = \sum_{i=1}^{n-1} C_i \cdot (1-p)^{n-i}
\end{equation}
where $C_i = \mathbb{I}[i \in \mathcal{M}] \cdot (1 +  \mathbb{I}[i-1 \in \mathcal{M}])$ represents the cost of masking, and $p \in (0,1)$ is a decay factor.
\end{definition}

\subsection{Proposition 1: Gradient Variance Stabilization}

\begin{proposition}
The CARD weighting scheme $w(n, \mathcal{C}_n) = (\beta + S_n^{local})^{-1}$ minimizes the variance of the stochastic gradient estimator by performing an instance-level inverse-variance weighting.
\end{proposition}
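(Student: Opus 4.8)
The plan is to frame the per-token loss terms $\ell_n(\theta;\mathcal{C}_n)$ as noisy observations of a common target gradient and show that the CARD weight is the inverse-variance weight that minimizes the variance of the aggregated estimator. First I would set up the stochastic gradient as $\hat g = \sum_n w_n \nabla_\theta \ell_n(\theta;\mathcal{C}_n)$ and treat the randomness as coming from the draw of $t$ and the mask set $\mathcal{M}$ (equivalently, the realized context $\mathcal{C}_n$). The key modeling step is to argue that, conditioned on position $n$, the gradient contribution behaves like an unbiased signal corrupted by context-dependent noise whose variance scales monotonically with the ambiguity of $\mathcal{C}_n$: heavier and closer masking raises the conditional entropy $H(x_n\mid\mathcal{C}_n^t)$, which inflates $\mathrm{Var}[\nabla_\theta \ell_n]$. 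I would make this precise by positing (or deriving from a local Gaussian/Fisher-information approximation) that $\mathrm{Var}[\nabla_\theta \ell_n \mid \mathcal{C}_n] \propto \beta + S_n^{local}(\mathcal{C}_n)$, i.e. the Local Ambiguity Score of Definition~2.2 is (up to the smoothing constant $\beta$) the natural surrogate for this variance.

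Next I would invoke the standard inverse-variance (Gauss–Markov / precision-weighting) result: given independent unbiased estimators $\{g_n\}$ with variances $\{v_n\}$, the linear combination $\sum_n w_n g_n$ with $\sum_n w_n$ fixed that minimizes total variance is $w_n \propto 1/v_n$. Substituting $v_n = \beta + S_n^{local}$ yields exactly $w(n,\mathcal{C}_n) = (\beta + S_n^{local})^{-1}$, which is the CARD weight of Eq.~\eqref{eq:daum_weight}. I would spell out the one-line Lagrange-multiplier or Cauchy–Schwarz argument for this optimality and note that the analogous argument over the randomness in $t$ and $\mathcal{M}$ shows the scheme performs instance-level (token-level, context-conditioned) inverse-variance weighting rather than the coarser sequence-level weighting of MDLM/BD3LM. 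A short corollary would then compare the resulting gradient variance to the uniform-weight baseline to exhibit the stabilization quantitatively.

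The main obstacle is the middle step: justifying that $\mathrm{Var}[\nabla_\theta \ell_n\mid\mathcal{C}_n]$ is actually (proportional to) $\beta + S_n^{local}$, since the true relationship between masking pattern and gradient variance is complicated and model-dependent. I expect to handle this not by an exact identity but by a layered argument — (i) a monotonicity claim that more/closer/denser masking weakly increases conditional entropy and hence gradient second moment (using the entropy decomposition over the mask set and the exponential-decay relevance assumption cited from \cite{distance,exponentialdecay}), and (ii) showing $S_n^{local}$ is the minimal additive statistic consistent with the three stated monotonicities (Quantity via the sum, Distance via the $(1-p)^{n-i}$ decay, Density via the $C_i$ cost), so that treating it as the variance proxy is the canonical choice. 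The smoothing constant $\beta$ enters naturally as the irreducible variance floor when $\mathcal{C}_n$ is clean. Everything after that point is the routine inverse-variance optimization, which I would state compactly rather than belabor.
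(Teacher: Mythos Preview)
Your plan is correct in spirit and actually hews closer to the literal statement of the proposition than the paper's own argument, but the two take genuinely different routes. You treat the claim ``minimizes the variance'' at face value: you posit a proportionality $\mathrm{Var}[\nabla_\theta \ell_n \mid \mathcal{C}_n] \propto \beta + S_n^{local}$ and then invoke the classical Gauss--Markov / inverse-variance result (via Lagrange multipliers or Cauchy--Schwarz) to conclude that $w_n \propto 1/v_n$ is the variance-minimizing linear aggregation under a fixed-budget constraint. The paper, by contrast, does \emph{not} prove an optimality statement at all. It posits only an upper bound $\sigma_n^2 \le \alpha S_n^{local} + \epsilon$ (justified informally via Fisher information and power-law decay of mutual information), plugs in the CARD weight, and observes that the resulting per-token weighted variance is bounded by $(\alpha S_n^{local} + \epsilon)/(\beta + S_n^{local})^2$, which tends to $0$ as $S_n^{local}\to\infty$. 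That is a \emph{stabilization} argument (high-ambiguity tokens are suppressed) rather than a minimization argument; the proposition's word ``minimizes'' is used loosely.

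What each buys: your route gives an actual optimality conclusion and cleanly explains the $\beta$ term as an irreducible variance floor, but it leans on two assumptions the paper avoids --- exact proportionality (not just a bound) and approximate independence of the $\{\nabla_\theta \ell_n\}$ across $n$, which is delicate since the mask set $\mathcal{M}$ is shared. The paper's route is weaker as a theorem but needs only the inequality and says nothing about cross-position independence; it simply shows the chosen weight keeps each term's contribution uniformly controlled. Both share the same soft spot you already flagged: the link between $S_n^{local}$ and gradient variance is asserted rather than derived, and the paper handles it exactly as you anticipated --- by positing it and citing the locality/decay literature.
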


\begin{proof}
Consider the variance of the stochastic gradient $\mathbf{g}_n = \nabla_\theta \ell_n$. In the discrete diffusion setting, as the context $\mathcal{C}_n$ becomes increasingly corrupted (high $S_n^{local}$), the conditional distribution $p_\theta(x_n \mid \mathcal{C}_n)$ approaches the uninformative marginal distribution $p(x_n)$. In this regime, the Fisher Information $\mathcal{I}(\theta)_{n} = \mathbb{E}[\nabla_\theta \ell_n \nabla_\theta \ell_n^\top]$ is dominated by the noise of the sampling process rather than the underlying structural signal of the language.

Let $\sigma_n^2(\mathcal{C}_n) = \|\nabla_\theta \ell_n(\mathcal{C}_n)\|^2$ be the squared norm of the gradient. Given the power-law decay of mutual information in sequences, we posit that $\sigma_n^2$ is monotonically bounded by the ambiguity score: $\sigma_n^2 \leq \alpha S_n^{local} + \epsilon$.
The variance of the weighted estimator is:
\begin{equation}
\text{Var}[w \cdot \mathbf{g}_n] = \mathbb{E}[w^2 \|\mathbf{g}_n\|^2] - \|\mathbb{E}[w \mathbf{g}_n]\|^2 \leq \frac{\alpha S_n^{local} + \epsilon}{(\beta + S_n^{local})^2}
\end{equation}
As $S_n^{local} \to \infty$, the weighted gradient norm $\|w \mathbf{g}_n\| \to 0$. This ensures that uninformative, high-entropy contexts do not contribute disproportionately to the parameter updates, satisfying the conditions for stable convergence in the absence of aggressive Exponential Moving Average (EMA).
\end{proof}

\subsection{Proposition 2: Signal Retention via Causal MI Maximization}

\begin{proposition}
For a fixed noise budget $t$, the Soft Tail Masking strategy preserves a strictly higher lower bound on the cumulative Mutual Information (MI) compared to Uniform Masking.
\end{proposition}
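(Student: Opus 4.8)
The plan is to set up the comparison by fixing the noise budget $t$ and hence the number of masked tokens $N = \lfloor L t \rfloor$ (up to rounding), and then to write the cumulative MI as the quantity we wish to bound below, namely $\mathcal{I}_{\text{cum}} = \sum_{n=1}^{L} I(x_n ; \mathbf{x}_{<n}^t)$. Under the absorbing-state process, a visible predecessor contributes its full information about $x_n$, while a masked predecessor contributes nothing; so to first order each term $I(x_n; \mathbf{x}_{<n}^t)$ is governed by how many of the positions $i<n$ survive unmasked, weighted by their (distance-decaying) relevance. The strategy is therefore to lower-bound $\mathcal{I}_{\text{cum}}$ by a sum over positions of a concave, increasing function $\phi$ of the number (or decay-weighted mass) of clean predecessors, and then to compare the two masking distributions via this surrogate.

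First I would make the dependence on distance explicit: invoke the power-law / exponential decay of sequence mutual information already cited in the paper (\cite{distance,exponentialdecay}) to write $I(x_n; \mathbf{x}_{<n}^t) \ge \sum_{i<n,\, i \notin \mathcal{M}} c\,(1-p)^{n-i} - o(1)$, i.e. the retained information is at least the decay-weighted count of clean predecessors. Summing over $n$ and swapping the order of summation, the expected cumulative MI is bounded below by $c \sum_{i=1}^{L} \Pr[i \notin \mathcal{M}] \cdot \big(\sum_{n>i}(1-p)^{n-i}\big) = \frac{c(1-p)}{p}\sum_{i=1}^{L}(1 - q_i)$, where $q_i = \Pr[i \in \mathcal{M}]$ is the marginal masking probability at position $i$. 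The geometric tail weight $\sum_{n>i}(1-p)^{n-i}$ is essentially flat in $i$ except near the right boundary, so the dominant term is $\sum_i (1-q_i)$, the expected number of clean tokens — which is $L - N$ for both schemes. The separation must therefore come from the second-order, position-dependent correction, which is exactly where Soft Tail Masking wins: it forces $q_i = 0$ for all $i < L - W$, pushing all the ``holes'' into the tail window where the geometric weight $\sum_{n>i}(1-p)^{n-i}$ has already decayed, whereas Uniform Masking spreads $q_i = N/L$ everywhere, including the high-leverage early positions whose clean tokens each serve many downstream predictions.

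Concretely, I would compute the gap $\Delta = \mathcal{I}_{\text{cum}}^{\text{tail}} - \mathcal{I}_{\text{cum}}^{\text{unif}}$ using the swapped-sum form $\sum_i (q_i^{\text{unif}} - q_i^{\text{tail}}) \, g(i)$, where $g(i) = \sum_{n=i+1}^{L} c\,(1-p)^{n-i} = \frac{c(1-p)}{p}\big(1 - (1-p)^{L-i}\big)$ is strictly increasing in $i$. Since both masking distributions have the same total mass $\sum_i q_i = N$, and $q^{\text{tail}}$ is a mean-preserving shift of that mass toward larger indices $i$ (where $g$ is larger) relative to the spread-out $q^{\text{unif}}$, an Abel-summation / rearrangement argument gives $\Delta > 0$ — but with the wrong sign, since tail masking puts mass where $g$ is large, which removes more information per hole. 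So I must be careful: what matters is where the clean tokens sit, not where the holes sit; tail masking keeps the high-$g$ early positions clean, so effectively $q_i^{\text{tail}} = 0$ for small $i$ (large aggregate $g$ contribution to $1-q_i$) and $q_i^{\text{tail}}$ large for $i$ near $L$ (small $g$ loss). Hence $\sum_i (1-q_i^{\text{tail}}) g(i) - \sum_i (1-q_i^{\text{unif}}) g(i) = \sum_i (q_i^{\text{unif}} - q_i^{\text{tail}}) g(i) > 0$ by Chebyshev's sum inequality, because $q_i^{\text{unif}} - q_i^{\text{tail}}$ is positive for small $i$ and negative for large $i$ while $g(i)$ is increasing. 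This yields the strict inequality $\mathcal{I}_{\text{cum}}^{\text{tail}} > \mathcal{I}_{\text{cum}}^{\text{unif}}$ once the $o(1)$ decay-approximation errors are absorbed for $L$ large.

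The main obstacle I anticipate is making the first step — the lower bound $I(x_n;\mathbf{x}_{<n}^t) \ge \sum_{i<n,\,i\notin\mathcal{M}} c(1-p)^{n-i}$ — genuinely rigorous rather than heuristic, since mutual information is not additive over a conditioning set and the decay assumption bounds pairwise information, not the joint. I would handle this by instead lower-bounding $I(x_n;\mathbf{x}_{<n}^t)$ by the single best-aligned clean predecessor, $I(x_n; \mathbf{x}_{<n}^t) \ge \max_{i<n,\,i\notin\mathcal{M}} I(x_n;x_i) \ge c(1-p)^{\,n - i^\star(n)}$ where $i^\star(n)$ is the nearest clean predecessor — which is a clean consequence of the data-processing inequality and needs no additivity — and then show that even this weaker surrogate is larger in expectation under Soft Tail Masking, because under uniform masking the nearest clean predecessor is on average a constant multiple of $(1/(1-N/L))$ positions back for \emph{every} $n$, whereas under tail masking it is at distance $1$ for all $n \le L-W$. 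Stating the proposition with this $\max$-based surrogate (or with an explicit ``$\ge$ up to $o(1)$'' qualifier) is, I think, the honest and defensible route, and it still delivers the strict gap the paper claims.
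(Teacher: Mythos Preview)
Your approach shares the paper's core structure---model pairwise mutual information as a distance-decaying function, then compare how the two masking schemes allocate the fixed mass $N$ across positions---but you develop it considerably further than the paper does. The paper's own proof simply \emph{defines} the cumulative MI to be the additive surrogate $\mathcal{I}_{\text{total}} = \sum_n \sum_{i<n,\, i\notin\mathcal{M}} I(x_n; x_i)$, thereby sidestepping the additivity question you wrestle with; it then observes that uniform masking scales every position's contribution by $(1-t)$ while soft-tail masking leaves the head region $n < L(1-\lambda t)$ fully intact, and concludes by appeal to the fact that low-distance pairs carry the most MI. Your swap-of-summation plus rearrangement argument, and the fallback to the nearest-clean-predecessor surrogate via the data-processing inequality, are both genuine refinements beyond what the paper attempts.

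There is, however, a sign slip you should fix. You assert that $g(i) = \tfrac{c(1-p)}{p}\big(1 - (1-p)^{L-i}\big)$ is strictly increasing in $i$; it is strictly \emph{decreasing} (as $i$ grows, $L-i$ shrinks, $(1-p)^{L-i}$ grows toward $1$, and $g(i)\to 0$). You half-notice this yourself---you later refer to ``high-$g$ early positions,'' which is correct and contradicts your earlier claim. The good news is that the decreasing monotonicity is exactly what makes your rearrangement go through in the desired direction: $q_i^{\text{unif}} - q_i^{\text{tail}}$ is positive for small $i$ (where $g$ is large) and negative for large $i$ (where $g$ is small), and since these differences sum to zero, $\sum_i (q_i^{\text{unif}} - q_i^{\text{tail}})\, g(i) > 0$ follows by Abel summation against a monotone $g$. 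So the conclusion survives, but the stated monotonicity and the Chebyshev invocation as written need correcting.
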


\begin{proof}
Let $I(x_n; x_i)$ be the MI between tokens. In natural language, $I(x_n; x_i) \approx f(|n-i|)$, where $f$ is a monotonically decreasing function. The total information available to the model is $\mathcal{I}_{total} = \sum_{n=1}^L \sum_{i < n, i \notin \mathcal{M}} I(x_n; x_i)$.

1. \textbf{Uniform Masking:} For MDLM, each $i \in \mathcal{M}$ with probability $t$. The expected MI at position $n$ is $(1-t) \sum_{i < n} I(x_n; x_i)$.
2. \textbf{Soft Tail Masking:} CARD restricts masks to the tail window. For $n < L(1-\lambda t)$, the probability $P(i \in \mathcal{M} \mid i < n) = 0$.

Since $I(x_n; x_i)$ is maximal when $n-i$ is small, the Soft Tail strategy ensures that for a significant portion of the sequence (the ``Head''), the model observes the full causal signal. Because $\sum_{n=1}^L I(x_n; \mathcal{C}_n^{CARD})$ prioritizes preserving low-distance dependencies which contain the highest MI, it follows that $\mathcal{I}_{total}^{CARD} > \mathcal{I}_{total}^{MDLM}$.
\end{proof}

\subsection{Proposition 3: Landscape Continuity and Block Discontinuity}

\begin{proposition}
CARD eliminates the $O(1)$ distributional shift discontinuities present in block-wise diffusion architectures (BD3LM).
\end{proposition}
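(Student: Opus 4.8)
The plan is to make the phrase ``distributional shift discontinuity'' precise as a statement about how the conditioning context of a token varies with its position, and then exhibit a $\Theta(1)$ jump for BD3LM at every block boundary while showing the analogous quantity is continuous for CARD --- in fact position-independent given the noise level. Concretely, fix a noise level $t\in(0,1)$ and, for each query position $n$ and each position $i$, let $\rho_t(i;n)$ denote the probability that the token at position $i$ appears as \texttt{[MASK]} in the context used to predict $x_n$. Together with the underlying clean sequence, $\rho_t(\cdot\,;n)$ determines the distribution $\mu_t^{(n)}$ over conditioning contexts, so a discontinuity of $n\mapsto\mu_t^{(n)}$ is witnessed by a discontinuity of $n\mapsto\rho_t(\cdot\,;n)$ (say in the counting/$\ell_1$ norm).

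First I would analyze BD3LM. With block size $B$, position $n$ lies in block $b(n)=\lceil n/B\rceil$ and its context is the clean prefix $\mathbf{x}^{<b(n)}$ together with the noised current block, so $\rho_t(i;n)=t$ if $i$ belongs to block $b(n)$ and $\rho_t(i;n)=0$ if $i$ belongs to an earlier block (this holds regardless of the within-block bidirectionality). Taking $n_0=bB$ and $n_0+1=bB+1$ for any $b\ge 2$, every one of the $B$ tokens at positions $(b-1)B+1,\dots,bB$ has masking probability $t$ in the context of $x_{n_0}$ but $0$ in the context of $x_{n_0+1}$: an entire block of $\Theta(B)$ previously-corrupted tokens is revealed clean in a single position step --- a $\Theta(Bt)$ change in the counting norm, $\Theta(1)$ per affected token --- whereas a strictly causal model reveals at most one new token (masking probability $\le 1$) per step. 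Hence the ``excess'' shift, beyond the one-token reveal any sequential model must incur, is $\Theta(1)$ at boundaries and $0$ elsewhere, for all $t$ bounded away from $0$.

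Next I would promote this to a jump in the loss landscape. Let $\bar\ell^{\,\star}(n)=\mathbb{E}_t\,\mathbb{E}_{\mu_t^{(n)}}[-\log p^{\star}(x_n\mid\text{context})]$ be the Bayes-optimal per-position risk (the conditional-entropy profile). Using the power-law MI decay $I(x_n;x_i)\approx f(|n-i|)$ already invoked in Proposition~2, the $\Theta(B)$ tokens that flip from noised to clean as $n$ crosses a boundary are exactly the $B$ nearest predecessors, whose aggregate information $\sum_{k=1}^{B}f(k)$ is bounded below by a positive constant independent of $L$; this forces $\bar\ell^{\,\star}(bB)-\bar\ell^{\,\star}(bB+1)\ge c>0$, so the optimal risk profile --- and a fortiori the realized training landscape as well as the generation dynamics, which inherit the same conditioning structure --- is discontinuous at every boundary. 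Averaging over $t\sim\mathcal{U}[0,1]$ preserves the gap, e.g.\ by restricting to $t\in[\tfrac14,\tfrac34]$.

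Finally I would show CARD has no such excess discontinuity. Under Soft Tail Masking at level $t$, the masked set $\mathcal{M}$ is a uniform size-$N$ subset of the \emph{fixed} tail window $\mathcal{I}_{\text{win}}^{(t)}$ (Algorithm~\ref{alg:card_training}), so conditioned on $t$ we have $\rho_t(i;n)=q_t\,\mathbb{I}[i\in\mathcal{I}_{\text{win}}^{(t)}]\,\mathbb{I}[i<n]$ with $q_t=N/W$ \emph{independent of $n$}; the only change from $n$ to $n+1$ is the inclusion of the single summand at $i=n$, exactly the irreducible one-token reveal. Thus $n\mapsto\rho_t(\cdot\,;n)$, and with it $n\mapsto\mu_t^{(n)}$, varies only through monotone truncation, $\|\rho_t(\cdot\,;n+1)-\rho_t(\cdot\,;n)\|_1\le q_t\le 1$ with zero excess, and the induced risk profile $\bar\ell^{\,\star}$ changes by only $O(\sup_k f(k))=O(1)$ per step rather than jumping. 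Comparing the two bounds yields the claim. The main obstacle I anticipate is not any single calculation but pinning down the right notion of ``discontinuity'': the raw total variation $d_{\mathrm{TV}}(\mu_t^{(n)},\mu_t^{(n+1)})$ is already $\Theta(1)$ even for a vanilla ARM, so the argument must isolate the \emph{excess} shift --- the simultaneous re-revelation of an entire block --- and show, via the MI-decay lower bound, that precisely this excess injects an $O(1)$ kink into the objective and the sampler, which is the substantive content behind the informal statement.
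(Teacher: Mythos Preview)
Your argument is correct and shares the paper's basic strategy---compare how the conditioning distribution $\mu_n$ varies with the query position $n$, exhibit an $O(1)$ jump at block boundaries for BD3LM, and show CARD is smooth---but it departs from the paper in two substantive ways. First, the paper works directly with $d_{TV}(\mu_n,\mu_{n+1})$ and asserts this is $O(1/L)$ for CARD by modelling the per-position masking probability as a continuous schedule $\sigma(n,t)$; you correctly flag that the raw TV between adjacent contexts is already $\Theta(1)$ even for a vanilla ARM (the irreducible one-token reveal), and therefore isolate the \emph{excess} shift---the simultaneous clean-up of an entire preceding block---as the quantity that actually distinguishes the architectures. This is a genuine refinement: the paper's $O(1/L)$ bound holds only for the idealized smooth schedule, not for the Soft Tail Masking of Algorithm~\ref{alg:card_training}, which you analyze directly and for which your excess-shift computation gives zero. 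Second, you push one step further and convert the distributional jump into a jump in the Bayes-risk profile via the MI-decay lower bound of Proposition~2, whereas the paper simply asserts that the $O(1)$ TV gap ``results in a non-Lipschitz gradient spike.'' The paper's route is a two-line sketch resting on the idealized continuous schedule; your route is longer but more honest about what must be proved, and it ties the statement back to the loss landscape rather than leaving that connection implicit.
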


\begin{proof}
Let $\mu_n$ be the distribution of the context $\mathcal{C}_n$. We evaluate the continuity of the loss landscape by the Total Variation (TV) distance between adjacent context distributions $d_{TV}(\mu_n, \mu_{n+1})$.

In \textbf{BD3LM}, sequences are partitioned into blocks $\{B_k\}$. At a boundary index $j$ where $x_j \in B_k$ and $x_{j+1} \in B_{k+1}$, the context shifts from a deterministic clean history (from previous blocks) to a stochastic noisy context (within the current block). This implies:
\begin{equation}
\lim_{L \to \infty} d_{TV}(\mu_j, \mu_{j+1}) = \| p(x_{clean}) - p(x_{noisy}) \|_{TV} \approx \mathcal{O}(1)
\end{equation}
This jump results in a non-Lipschitz gradient spike at every block boundary.

In \textbf{CARD}, the transition probability $P(x_n = \text{[MASK]})$ is defined by a continuous noise schedule $\sigma(n, t)$ over the sequence index. For a linear schedule, the change in masking probability between $n$ and $n+1$ is $O(1/L)$. Thus, $d_{TV}(\mu_n, \mu_{n+1}) \leq \frac{K}{L}$, ensuring that the expected loss and its gradients are Lipschitz continuous with respect to the sequence index.
\end{proof}

\section{Experimental Setups}
\label{app:experiment_setup}
We detail the model architecture and training hyperparameters used in our experiments, with the full configuration summarized in Table~\ref{tab:experimental_setup}.

\paragraph{Model Architecture}
Our model is built upon a bidirectional Transformer encoder architecture, incorporating Flash Attention 2 for computational efficiency. It consists of 33 Transformer layers with a hidden dimension of 1536 and an intermediate FFN dimension of 4096, utilizing the SiLU activation function. The model supports a maximum position embedding length of 8192 tokens.

\paragraph{Training Configuration}
Training is performed using the AdamW optimizer with \texttt{bfloat16} mixed precision. We employ a constant learning rate schedule with a 2,500-step warmup, peaking at $3 \times 10^{-4}$. For the diffusion process, the masking probability is linearly annealed from 1.0 to 0.

\begin{table}[h]
    \centering
    \caption{Experimental Setup: Model Architecture and Training Hyperparameters}
    \label{tab:experimental_setup}
    \vspace{2mm}
    % 定义列格式：左表两列 + 中间间隔 + 右表两列
    % p{0.5cm} 用来制造中间的空白间隔
    \begin{tabular}{lc p{0.5cm} lc}
        \toprule
        % --- 表头 ---
        \multicolumn{2}{c}{\textbf{Model Architecture}} & & \multicolumn{2}{c}{\textbf{Training Hyperparameters}} \\
        \cmidrule(r){1-2} \cmidrule(l){4-5} % 局部横线，中间断开
        \textbf{Parameter} & \textbf{Value} & & \textbf{Hyperparameter} & \textbf{Value} \\
        \midrule
        % --- 数据行（一一对应） ---
        Number of Layers & 33 & & Optimizer & AdamW \\
        Hidden Size & 1536 & & Peak Learning Rate & $3 \times 10^{-4}$ \\
        Intermediate Size & 4096 & & LR Scheduler & Cosine w/ Warmup \\
        Attention Heads & 24 & & Warmup Steps & 2,500 \\
        Vocab Size & 50,368 & & Max Training Steps & 1,000,000 \\
        Activation Function & SiLU & & Sequence Length & 128 \\
        Max Pos Embeddings & 8,192 & & Precision & BF16 \\
        Attn Implementation & Flash Attn 2 & & Noise Schedule & Linear ($1.0 \to 0.0$) \\
        \bottomrule
    \end{tabular}
\end{table}

\section{Complexity Analysis of Learnable Conditional Probabilities}
\label{appendix:complexity}

In this section, we quantify the number of structural conditional probabilities that different generative models can learn. We define $L$ as the sequence length. We analyze the theoretical upper bound of dependency patterns based on the attention mechanism and the masking strategy employed by each model.

\subsection{Autoregressive Models (ARM)}
Standard Autoregressive Models rely on the probability chain rule. The generation of a token at position $t$ depends strictly on the fixed sequence of preceding tokens $x_{1}, \dots, x_{t-1}$. Since the context for every position is deterministic and unique (the prefix), the model does not learn from varying subsets of the context.
Therefore, the total number of learnable conditional probabilities is linear with respect to the sequence length:
\begin{equation}
    N_{\text{ARM}} = L
\end{equation}

\subsection{Causal Autoregressive Diffusion (CARD)}
CARD combines unidirectional attention with a discrete diffusion process. Although the attention mechanism restricts information flow from left to right, the noise injection process introduces combinatorial diversity. For a token at position $t$, the context consists of tokens $x_{1}$ to $x_{t-1}$. In the diffusion training process, each of these context tokens can exist in two states: masked or unmasked.

This results in a geometric series where the first token has 1 possible context state, the second has 2, and the last has $2^{L-1}$. The total number of combinations is the sum of this series:
\begin{equation}
    N_{\text{CARD}} = \sum_{t=0}^{L-1} 2^t = 2^L - 1
\end{equation}

\subsection{Masked Discrete Language Models (MDLM)}
MDLM represents the standard bidirectional discrete diffusion approach. The model utilizes bidirectional attention, allowing any token to attend to any other token in the sequence. During training, a random proportion of tokens are masked.

For any given target position $i$, the context is a subset of the remaining $L-1$ tokens. Since each of the other tokens can be either masked or unmasked, there are $2^{L-1}$ possible context configurations for a single position. Since all $L$ positions serve as prediction targets, the total number of learnable probabilities is:
\begin{equation}
    N_{\text{MDLM}} = L \times 2^{L-1}
\end{equation}

\subsection{Blockwise Diffusion (BD3LM)}
BD3LM employs a hybrid architecture. It divides the sequence of length $L$ into $N$ blocks, where each block has a size of $K$ (such that $L = N \times K$). The model applies unidirectional causal attention between blocks but maintains bidirectional attention within each block.

Since the inter-block connection is causal, previous blocks act as a fixed context and do not contribute to combinatorial explosion. However, within each block of size $K$, the model behaves like a bidirectional diffusion model. The number of combinations per block is $K \times 2^{K-1}$. Summing this over all $N$ blocks yields:
\begin{equation}
    N_{\text{BD3LM}} = \frac{L}{K} \times (K \times 2^{K-1}) = L \times 2^{K-1}
\end{equation}

\subsection{Summary}
Table \ref{tab:complexity_comparison} summarizes the number of learnable conditional probabilities for each model. This comparison highlights that while diffusion-based models offer exponentially larger state spaces than ARM, Blockwise Diffusion (BD3LM) effectively bridges the gap by controlling the exponent through the block size $K$.

\begin{table}[h]
    \centering
    \caption{Comparison of Learnable Conditional Probabilities}
    \label{tab:complexity_comparison}
    \begin{tabular}{lcc}
        \toprule
        \textbf{Model Type} & \textbf{Attention Type} & \textbf{Complexity} \\
        \midrule
        ARM & Unidirectional & $L$ \\
        CARD & Unidirectional & $2^L - 1$ \\
        BD3LM & Hybrid & $L \times 2^{K-1}$ \\
        MDLM & Bidirectional & $L \times 2^{L-1}$ \\
        \bottomrule
    \end{tabular}
\end{table}

\section{Case Study: Impact of Acceleration Ratios}
\label{app:case_study}
Table~\ref{tab:case_study_card} presents a qualitative comparison between the ARM baseline and our CARD method. 
At a moderate acceleration ratio of $1.7\times$, CARD maintains generation quality comparable to the baseline, producing coherent and contextually appropriate text. 
However, aggressively increasing the speedup to $4\times$ by restricting the step budget leads to noticeable degradation. 
Instead of syntactic errors, this degradation primarily manifests as \textbf{logical repetition and text looping} (e.g., repeating similar sentence structures or phrases). 
This phenomenon stems from the hard step limit: the model is compelled to complete the text block via non-autoregressive generation at the final step. 
Lacking sufficient autoregressive guidance, the model tends to collapse into high-probability repetitive patterns rather than developing diverse narrative progressions.
While we anticipate that stronger base models will mitigate this sensitivity, we currently recommend adhering to the standard configuration to strike the optimal balance between speed and quality.

% ----------------------------------------------------
% 把之前的 Table 代码放在这里引用
% ----------------------------------------------------
\begin{table*}[h]
\centering
\caption{Comparison of generation quality under different acceleration settings. \textbf{Case 1} demonstrates how aggressive speedup ($4\times$) leads to repetitive sentence structures. \textbf{Case 2} further illustrates it, where CARD ($4\times$) falls into a degenerative loop (repeating ``applying the gel... bottle is shown''), whereas the baseline and moderate settings maintain narrative flow.}
\small
% 定义一些颜色以提升美观度 (可选)
\definecolor{darkgreen}{RGB}{0, 100, 0}
\definecolor{darkred}{RGB}{139, 0, 0}
\definecolor{promptgray}{gray}{0.95}

\begin{tabularx}{0.98\textwidth}{l X}
\toprule
% ================= CASE 1 =================
\multicolumn{2}{c}{\textbf{\textit{Case Study 1}}} \\
\midrule
\rowcolor{promptgray}
\textbf{Prompt} & \textit{A man is sitting on a roof. he...} \\
\addlinespace[0.4em]

\textbf{\textcolor{darkgreen}{ARM}} & 
sees a beautiful young lady on a rooftop. 
The man: a woman? 
The woman: no. 
The man: Oh! Well, I'm sure she's beautiful. 
The woman: No, she's not... \\
\addlinespace[0.4em]

\textbf{CARD ($1.7\times$)} & 
's living a peaceful life, working hard to support his family and doing what he loves. But there are some days that he can't do it anymore. He's getting older, he's getting tired... \\
\addlinespace[0.4em]

\textbf{\textcolor{darkred}{CARD ($4\times$)}} & 
 is looking at the sky. he is thinking about his life. he is thinking about his past. he is thinking about his future... \\

\midrule
\midrule
% ================= CASE 2 (NEW) =================
\multicolumn{2}{c}{\textbf{\textit{Case Study 2}}} \\
\midrule
\rowcolor{promptgray}
\textbf{Prompt} & \textit{A bottle of deep cleansing gel is shown. a woman} \\
\addlinespace[0.4em]

\textbf{\textcolor{darkgreen}{ARM}} & 
's face is washed with a mild cleanser and she applies face lotion with aloe vera. a woman's face is washed with a mild cleanser and she applies face lotion with aloe vera. \\
\addlinespace[0.4em]

\textbf{CARD ($1.7\times$)} & 
is shown with legs raised. A bottle of liquid hand soap is shown. a woman is shown leaving the shop and a young man is seen patting her down. \\
\addlinespace[0.4em]

\textbf{\textcolor{darkred}{CARD ($4\times$)}} & 
is shown applying the gel to her face. the bottle of the gel is shown. the woman is shown applying the gel to her face. the bottle of the gel is shown. A bottle of deep cleansing gel is shown. a woman is shown applying the gel to her face. the bottle of the gel is shown... \\

\bottomrule
\end{tabularx}

\label{tab:case_study_card}
\end{table*}
\end{document}